
\typeout{IJCAI--21 Instructions for Authors}


\documentclass{article}
\pdfpagewidth=8.5in
\pdfpageheight=11in
\usepackage{ijcai21}

\usepackage{times}
\usepackage{soul}
\usepackage{url}
\usepackage[hidelinks]{hyperref}
\usepackage[utf8]{inputenc}
\usepackage[small]{caption}
\usepackage{graphicx}

\usepackage{grffile}

\usepackage{amsfonts,amsthm,amsmath}
\usepackage{braket}
\usepackage{booktabs}
\usepackage{multirow}
\usepackage{xspace}
\urlstyle{same}

\usepackage[linesnumbered,ruled,vlined]{algorithm2e}




\newtheorem{theorem}{Theorem}


\newcommand{\notationFont}{\mathit}
\newcommand{\notationSpace}{\mathcal}
\newcommand{\notationVec}{\mathbf}
\newcommand{\notationReplace}{\text}

\newcommand{\stateSet}{\notationSpace{S}}
\newcommand{\actionSet}{\notationSpace{A}}
\newcommand{\state}{\notationFont{s}}
\newcommand{\action}{\notationFont{a}}

\newcommand{\reward}{\notationFont{r}}
\newcommand{\policy}{\notationFont{\pi}}

\newcommand{\policyb}{\notationFont{\pi_b}}
\newcommand{\policyc}{\notationFont{\hat{\pi}_b}}
\newcommand{\dataset}{\notationSpace{D}}
\newcommand{\actionVec}{\notationVec{a}}
\newcommand{\stateVec}{\notationVec{s}}

\newcommand{\distance}{\notationFont{l}}
\newcommand{\loss}{\notationSpace{L}}

\newcommand{\bellman}{\notationSpace{T}}

\newcommand{\EXP}{\mathbb{E}}
\newcommand{\gmodel}{\mathit{G_\omega(s)}}

\newtheorem{definition}{Definition}
\newtheorem{lemma}{Lemma}

\newcommand{\qvalue}{\notationReplace{state-action value function }}
\newcommand{\error}{\epsilon}
\newcommand{\errorg}{\epsilon_g}

\newcommand{\cheetah}{\textsc{Halfcheetah}}
\newcommand{\hopper}{\textsc{Hopper}}
\newcommand{\walker}{\textsc{Walker2d}}

\newcommand{\methodFont}{\textsl}
\newcommand{\ours}{\methodFont{AQL}\xspace}

\newcommand{\dqn}{\methodFont{DQN}\xspace}
\newcommand{\bc}{\methodFont{BC}\xspace}
\newcommand{\bcq}{\methodFont{BCQ}\xspace}
\newcommand{\ddpg}{\methodFont{DDPG}\xspace}
\newcommand{\bear}{\methodFont{BEAR}\xspace}
\newcommand{\sac}{\methodFont{SAC}\xspace}
\newcommand{\brac}{\methodFont{BRAC}\xspace}
\newcommand{\bracv}{\methodFont{BRAC-v}\xspace}
\newcommand{\bracp}{\methodFont{BRAC-p}\xspace}
\newcommand{\cql}{\methodFont{CQL}($\mathcal{H}$)\xspace}

\newcommand{\ie}{\textit{i.e.,}\xspace}
\newcommand{\eg}{\textit{e.g.,}\xspace}
\newcommand{\etc}{\textit{etc}\xspace}

\let\oldnl\nl
\newcommand{\nonl}{\renewcommand{\nl}{\let\nl\oldnl}}

\pdfinfo{
/TemplateVersion (IJCAI.2021.0)
}

\title{Boosting Offline Reinforcement Learning with Residual Generative Modeling}

\author{
Hua Wei$^1$
\and
Deheng Ye$^1$\and
Zhao Liu$^1$\and
Hao Wu$^1$\and
Bo Yuan$^1$\and \\
Qiang Fu$^1$\and
Wei Yang$^1$\And
Zhenhui (Jessie) Li $^2$\\
\affiliations
$^1$Tencent AI Lab, Shenzhen, China\\
$^2$The Pennsylvania State University, University Park, USA\\
\emails
\{harriswei, dericye, ricardoliu, alberthwu, jerryyuan, leonfu, willyang\}@tencent.com \\
jessieli@ist.psu.edu
}

\begin{document}

\maketitle

\begin{abstract}
Offline reinforcement learning (RL) tries to learn the near-optimal policy with recorded offline experience without online exploration.
Current offline RL research includes: 1) generative modeling, i.e., approximating a policy using fixed data; and 2) learning the state-action value function. 
While most research focuses on the state-action function part through reducing the bootstrapping error in value function approximation induced by the distribution shift of training data, the effects of error propagation in generative modeling have been neglected. 
In this paper, we analyze the error in generative modeling. We propose AQL (action-conditioned Q-learning), a residual generative model to reduce policy approximation error for offline RL. 
We show that our method can learn more accurate policy approximations in different benchmark datasets. 
In addition, we show that the proposed offline RL method can learn more competitive AI agents in complex control tasks under the multiplayer online battle arena (MOBA) game \textit{Honor of Kings}.
\end{abstract}

\section{Introduction}

Reinforcement learning (RL) has achieved promising results in many domains~\cite{mnih2015human,silver2017mastering,kalashnikov2018scalable,berner2019dota,vinyals2019alphastar,ye2020supervised,ye2020towards}.
However, being fettered by the online and trial-and-error nature, applying RL in real-world cases is difficult~\cite{dulac2019challenges}. 
Unlike supervised learning which directly benefits from large offline datasets like ImageNet~\cite{deng2009imagenet}, current RL has not made full use of offline data~\cite{levine2020offline}. 
In many real-world scenarios, online exploration of the environment may be unsafe or expensive. 
For example, in recommendation~\cite{li2011unbiased} or healthcare domains~\cite{gottesman2019guidelines}, a new policy may only be deployed at a low frequency after extensive testing. In these cases, the offline dataset is often large, potentially consisting of years of logged experiences.
Even in applications where online exploration is feasible, offline data is still beneficial. For example, in strategy video games (\textit{Dota} or \textit{Honor of Kings}) that require highly complex action control, current RL methods still learn from the scratch, resulting in a long time to master human-level skills~\cite{berner2019dota,ye2020mastering}, where there is a large amount of logged replay data from vast players to be utilized.

Although off-policy RL methods~\cite{mnih2015human,lillicrap2015continuous,haarnoja2018soft} may be executed in the offline scenario directly, their performance was shown to be poor~\cite{fujimoto2019off} with a fixed batch of data and no interactions with the environment. 
The poor performance is suspected due to incorrect value estimation of actions outside the training data distribution \cite{kumar2019stabilizing}. The error of estimated values will accumulate and amplify during the Bellman backup process in RL. Typical off-policy RL would improve the learned policy by trying out the policy in the environment to correct the erroneous estimations, which is not applicable in purely offline scenarios.

Instead of correcting erroneous estimations by interacting with the environment, offline RL provides an alternative for the offline scenario. Current representative offline RL methods~\cite{fujimoto2019off,kumar2019stabilizing,wu2019behavior,kumar2020conservative} mainly study how to reduce the error with conservative estimations, \ie constraining the action or state-action distribution around the given dataset's distribution when learning value functions. With such estimations, the learned RL policy can approach or exceed the original policy (also called \textit{behavioral policy}) that generates the fixed dataset. As most offline RL methods assume that the behavioral policy is unknown, typical offline RL methods would first approximate the behavioral policy through generative modeling~\cite{levine2020offline}, \ie learn to output actions for given states. Then, the objective is to learn approximations for selecting the highest valued actions that are similar to the approximated behavioral policy.

Different from the aforementioned methods, this paper focuses on investigating the impact of errors from generative modeling on offline RL. As we will show, in value-based off-policy scenarios, the error of generative modeling will accumulate in the process of learning the Q-value function during Bellman backup. Our main contribution involves studying the accumulating process of generative modeling error in offline RL and developing a practical method to mitigate this error. 
To expand, we first examine the error accumulation process during offline RL learning in detail and analyze the influence of generative modeling error on the final offline RL error on the Q-value function. 
Then we propose an error reduction method via residual learning~\cite{huang2017learning}. 
Through experiments on a set of benchmark datasets, we verify the effectiveness of our method in boosting offline RL performance over state-of-the-art methods. 
Furthermore, in the scenario of the multiplayer online battle arena (MOBA) game \textit{Honor of Kings}, which involves large state-action space, our proposed method can also achieve excellent performance. 
\section{Related Work}
Under fully offline settings where no additional online data collection is performed, both offline RL methods and imitation learning methods can be used to learn a policy from pre-recorded trajectories. 

\paragraph{Offline RL.} 
Offline RL describes the setting in which a learner has access to only a fixed dataset of experience, in contrast to online RL which allows interactions with the environment.
Existing offline RL methods suffer from issues pertaining to OOD actions~\cite{fujimoto2019off,kumar2019stabilizing,wu2019behavior,kumar2020conservative}. Prior works aim to make conservative value function estimations around the given dataset's distribution, and then only use action sampled from this constrained policy in Bellman backups for applying value penalty. Different from these works, this paper focuses on how the errors propagate throughout the whole process of offline RL, from generative modeling to value approximation.

\paragraph{Imitation learning.} 
Imitation learning (IL) is to learn behavior policies from demonstration data~\cite{schaal1999imitation,hussein2017imitation}. 
Though effective, these methods are not suitable for offline RL setting because they require either on-policy data collection or oracle policy. Different from offline RL, imitation learning methods do not necessarily consider modeling the long-term values of actions or states like reinforcement learning methods.

\section{Preliminaries}
\label{sec:prelim}
\subsection{Reinforcement Learning} We consider the environment as a Markov Decision Process (MDP) $( \stateSet, \actionSet, P, R, \gamma )$, where $\stateSet$ is the state space, $\actionSet$ is the action space, $P:\mathcal{S}\times\mathcal{A}\times\mathcal{S}\rightarrow[0,1]$ denotes the state transition probability, 
$R:\mathcal{S}\times\mathcal{A}\rightarrow\mathbb{R}$ represents the reward function, and $\gamma\in(0, 1]$ is the discount factor. 
A policy $\pi$ 
is a mapping $\mathcal{S}\times\mathcal{A}\rightarrow [0, 1]$.

A value function provides an estimate of the expected cumulative reward that will be obtained by following some policy $\policy(\actionVec_t | \stateVec_t)$ 
when starting from a state-action tuple $(\state_t, \action_t)$ in the case of the state-action value function $Q^\policy(\state_t, \action_t)$: 

\begin{equation}
\label{eq:bellman-q}
\begin{aligned}
Q^\policy(\state_t, \action_t) & =r(\state_t, \action_t) + \gamma \EXP_{\state_{t+1}, \action_t}
[Q^\policy(\state_{t+1}, \action_{t+1})]
\end{aligned}
\end{equation}

In the classic \textit{off-policy} setting, the learning of Q-function is based on the agent's replay buffer $\dataset$ that gathers the experience of the agent in the form of $(\state_t, \action_t, \reward_t, \state_{t+1})$, and each new policy $\policy_k$ collects additional data by exploring the environment. 
Then $\dataset$, which consists of the samples from $\policy_0, \dots,\policy_k$, is used to train a new updated policy $\policy_{k+1}$.

\subsection{Offline Reinforcement Learning} Offline RL additionally considers the problem of learning policy $\pi$ from a fixed dataset $\dataset$ consisting of single-step transitions $(\state_t, \action_t, \reward_t, \state_{t+1})$, without interactions with the environment. This is in contrast to many off-policy RL algorithms that assume further interactions with the current policy rolling out in the environment. In this paper, we define the behavioral policy $\policyb$ as the conditional distribution $p(a|s)$ in the dataset $\dataset$, which is treated unknown. In real-world cases, the training dataset could be generated by a collection of policies. For simplicity, we refer to them as a single behavioral policy $\policyb$.

\subsubsection{Value Function Approximation} 
For large or continuous state and action spaces, the value function can be approximated with neural networks $\hat{Q}_\theta$, parameterized by $\theta$.  With the notion of Bellman operator $\bellman^\policy$, we can denote Equation~\eqref{eq:bellman-q} as $\hat{Q}^\policy = \bellman^\policy \hat{Q}^\policy $ with $\gamma \in [0,1)$. This Bellman operator has a unique fixed point that corresponds to the true state-value function for $\policy(\action|\state)$, which can be obtained by repeating the iteration $\hat{Q}^\policy_{k+1} = \bellman^\policy_{k} \hat{Q}^\policy$, and it can be shown that $\lim_{k\rightarrow\infty} \hat{Q}^\policy_{k} = \bellman^\policy \hat{Q}^\policy$. 

Offline RL algorithms based on this value function approximation with the iterative update are shown to suffer from action distribution shift~\cite{fujimoto2019off} during training. Since the policy in next iterate is computed by choosing actions that greedily maximize the \qvalue at each state, $\policy_{k+1}(\action|\state)=\arg\max_{\action}Q^\policy_{k}(\state,\action)$, it may be biased towards out-of-distribution (OOD) actions with erroneously high Q-values. In RL with explorations, such errors can be corrected by rolling out the action in the environment and observing its true value. In contrast, an offline RL agent is unable to query such information from the environment. To mitigate the problem caused by OOD actions, typical offline RL methods focus on constraining the learned policy to output actions that lie in the training distribution $\policyb$.

\subsubsection{Generative Modeling}  Because we do not assume direct access to $\policyb$, it is common in previous work to approximate this behavior policy via a generative model $\gmodel$, with max-likelihood over $\dataset$~\cite{fujimoto2019off,kumar2019stabilizing,wu2019behavior,levine2020offline}:

\begin{equation}
\gmodel = \hat{\policy}_b:= \arg \max_{\hat{\policy}} \EXP_{(\state, \action, \reward, \state')\sim \dataset}[\log \hat{\policy}(a|s)]
\end{equation}

We denote the approximated policy as $\hat{\policyb}$ and refer to it as ``cloned policy'' to distinguish it from $\policyb$.

In Section~\ref{sec:gen-model-err}, we will analyze how the errors propagate from generative modeling to value function approximation, resulting in the overall errors for offline RL. Then we will introduce how to reduce the overall errors of offline RL by reducing the generative modeling error, with theoretical proofs and implemented models in Section~\ref{sec:residual-gen}.

\section{Generative Modeling Error Propagation}
\label{sec:gen-model-err}
In this section, we define and analyze how the generative modeling error propagates during the process of value estimation in offline RL. We derive bounds which depend on the generative modeling. This motivates further focusing on mitigating generative modeling error. 

\subsection{Generative Modeling Error}
As discussed in the last section, we need to approximate $\policy_\dataset(s,a)$ with a generative model $\gmodel$. If we train $\gmodel$ with supervised learning (i.e., standard likelihood maximization) on $\dataset$, we have the following result from~\cite{ross2011reduction}, whose proof can be found in~\cite{levine2020offline}.

\begin{lemma}[Behavioral cloning error bound]
\label{lemma:bc-errorbound}
If $\tau^{\policyb}(s)$ is the state distribution induced by $\policyb$  and $\policy(a|s)$ is trained via standard likelihood maximization on $\stateVec \sim \tau^{\policyb}(s)$ and optimal labels $\actionVec$, and attains generalization error $\error_g$ on $\stateVec \sim \tau^{\policyb}(s)$, then $\distance(\policy) \leq C + H^2\error_g$ is the best possible bound on the expected error of the learned policy, where $C$ is the true accumulated reward of $\policyb$.
\end{lemma}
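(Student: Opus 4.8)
The plan is to prove this as the classical quadratic compounding bound for behavioral cloning, treating $\errorg$ as the expected per-step disagreement between the cloned policy $\policy$ and the behavioral policy $\policyb$ when states are drawn on-distribution, i.e.\ from $\tau^{\policyb}(s)$. Concretely, I would first formalize the generalization error as $\errorg = \EXP_{\state \sim \tau^{\policyb}}\!\left[\,\mathbf{1}[\policy(\state) \neq \policyb(\state)]\,\right]$ for a deterministic expert, or its total-variation analog $\EXP_{\state \sim \tau^{\policyb}}\!\left[\,\|\policy(\cdot|\state) - \policyb(\cdot|\state)\|_{TV}\,\right]$ in the stochastic case, and interpret $\distance(\policy)$ as the expected accumulated error of rolling out $\policy$ over a horizon of $H$ steps, measured against the baseline value $C$ attained by $\policyb$. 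The goal then reduces to showing that this gap relative to $C$ is at most $H^2\errorg$.

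The core step is a first-deviation coupling argument. I would couple the trajectories generated by $\policy$ and $\policyb$ from a common initial state, letting them take identical actions as long as they agree. Let $p_t$ denote the probability that the two rollouts have not yet diverged by step $t$. The key observation is that, conditioned on no deviation through step $t-1$, the state at step $t$ is still distributed according to $\tau^{\policyb}$; hence the \emph{conditional} probability that $\policy$ first deviates at step $t$ is bounded by $\errorg$. This is precisely what controls the on-distribution behavior and isolates the single point at which distribution shift can begin.

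The compounding then comes from worst-case accounting after the first deviation. Once $\policy$ departs from $\policyb$ at some step $t$, the subsequent states are off-distribution, so I can no longer invoke $\errorg$; in the worst case $\policy$ incurs the maximal per-step reward deficit for all remaining $H - t$ steps. Writing the total deficit as a sum over the first-deviation step, $\distance(\policy) \le C + \sum_{t=1}^{H} P(\text{first deviation at } t)\,(H - t + 1)$, and substituting the per-step bound $\errorg$ on the conditional deviation probabilities, the summation $\sum_{t=1}^{H} \errorg\,(H - t + 1)$ scales as $O(H^2\errorg)$, producing the quadratic horizon factor and yielding $\distance(\policy) \le C + H^2\errorg$.

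I expect the main obstacle to be handling the distribution shift cleanly after the first mistake: the state distribution of the learned policy must be decomposed into an on-distribution component, weighted by $p_t$, and an uncontrolled off-distribution remainder whose contribution can be bounded only through the reward range rather than through $\errorg$. Making the telescoping sum over first-deviation times yield a tight $H^2$, rather than a loose bound, requires care with the conditional (as opposed to unconditional) mistake probabilities. Finally, justifying that $H^2\errorg$ is the \emph{best possible} bound, as the statement asserts, would require exhibiting a worst-case MDP in which a single early error genuinely propagates to a $\Theta(H)$ reward deficit over $\Theta(H)$ subsequent steps, matching the upper bound up to constants.
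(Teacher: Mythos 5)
The paper does not actually prove this lemma itself; it imports the result from Ross et al.\ (2011), pointing to the derivation in Levine et al.\ (2020). Your sketch is precisely that standard argument --- decompose the learner's state distribution at step $t$ into the on-distribution part weighted by the no-mistake probability $p_{t-1}$ (where the conditional first-deviation probability is controlled by $\errorg$) and an uncontrolled off-distribution remainder charged at the worst-case per-step cost, then sum $\sum_{t=1}^{H}\errorg(H-t+1)=O(H^2\errorg)$ --- so it matches the cited proof in approach and is correct, including your observation that the ``best possible'' claim requires the separate matching lower-bound construction (the single-early-mistake MDP) given in those references.
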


This means that even with optimal action labels, we still get an error bound at least quadratic in the time horizon $H$ in the offline case. Intuitively, the policy $\policyc$ learned with generative model $\gmodel$ may enter into states that are far outside of the training distribution, where the generalization error bound $\errorg$ no longer holds on unseen states during training. Once the policy enters one OOD state, it will keep making mistakes and remain OOD for the remainder of the testing phase, accumulating $O(H)$ error. Since there is a non-trivial chance of entering an OOD state at every one of the $H$ steps, the overall error scales as $O(H^2)$.

\subsection{Error Propagation on Value Estimation}

\begin{definition}[Value function approximation error]
We define $\error^\policy_\dataset(s,a)$ as the value function approximation error between the true \qvalue $Q^\policy_\dataset$ computed from the dataset $\dataset$ and the true \qvalue $Q^*$ :

\begin{equation}
    \error^\policy_\dataset(s,a) = Q^*(s,a) - Q^\policy_\dataset(s,a)
\end{equation}
\end{definition}

\begin{theorem}
\label{eq:qvalue-errorbound}
Given a policy $\policyb$ that generates the dataset $\dataset$, if we model its cloned policy $\policyc$ from $\dataset$ with a generative modeling error of $\error_g$, assume that $\delta(s,a) = \sup_\dataset \error^\policy_\dataset(s,a) $ and $\eta(s,a) = \sup_\dataset \errorg(a|s) $, with the action space of dimension $|\actionSet|$, $\delta(s,a)$ satisfies the following:
\begin{equation}
\label{eq:error}
\begin{aligned}
&  \delta(s,a) \geq  \sum\limits_{s'} (p^*(s'|s, a) - p_\dataset(s'|s, a)) \\
    & \cdot [r(s,a)+\gamma(\sum\limits_{a'}(\policyb(a'|s')+ |\actionSet|\eta ) Q^\policy_\dataset(s',a') ] \\
 & + p^*(s'|s, a)\cdot \gamma (\sum\limits_{a'}\policyb(a'|s')+ |\actionSet|\eta ) \error^\policy_\dataset(s,a)
\end{aligned}
\end{equation}

\end{theorem}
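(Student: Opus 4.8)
The plan is to start from the two Bellman fixed-point equations that define the quantities being compared. On the one hand, $Q^*(s,a)$ satisfies the optimal Bellman equation under the true transition kernel $p^*(s'|s,a)$; on the other, $Q^\policy_\dataset(s,a)$ satisfies the policy-evaluation Bellman equation under the empirical kernel $p_\dataset(s'|s,a)$ with the cloned successor policy $\policyc$ (which Lemma~\ref{lemma:bc-errorbound} guarantees is close to $\policyb$). Substituting both into the definition $\error^\policy_\dataset(s,a) = Q^*(s,a) - Q^\policy_\dataset(s,a)$ cancels the shared reward $r(s,a)$ and leaves a single $\gamma$-discounted difference of successor expectations, namely $\gamma\big[\sum_{s'} p^*(s'|s,a)\,V^*(s') - \sum_{s'} p_\dataset(s'|s,a)\,V^\policy_\dataset(s')\big]$, where $V$ denotes the policy-weighted sum of successor $Q$-values.

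Next I would perform the standard add-and-subtract decomposition, inserting the intermediate term $\gamma\sum_{s'} p^*(s'|s,a)\,V^\policy_\dataset(s')$. This splits the difference cleanly into two pieces: a \emph{dynamics-gap} term $\gamma\sum_{s'}\big(p^*(s'|s,a)-p_\dataset(s'|s,a)\big)\,V^\policy_\dataset(s')$ and a \emph{propagation} term $\gamma\sum_{s'} p^*(s'|s,a)\big(V^*(s')-V^\policy_\dataset(s')\big)$. The first piece is exactly the source of the summand on the right-hand side of Equation~\eqref{eq:error} carrying the factor $(p^*(s'|s,a)-p_\dataset(s'|s,a))$; the bracketed reward $r(s,a)$ may be reattached there harmlessly since $\sum_{s'}(p^*-p_\dataset)=0$, so it merely rewrites the successor expectation as a full Bellman target. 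The second piece, when expanded, reintroduces the successor approximation error $\error^\policy_\dataset(s',a')$ weighted by the behavioral policy, which is the recursive term of the claim.

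I would then inject the generative-modeling error. Using the pointwise bound $\policyc(a'|s') \le \policyb(a'|s') + \eta$ with $\eta = \sup_\dataset \errorg(a|s)$, and summing the slack across the $|\actionSet|$ actions, converts the successor policy weights $\policyb(a'|s')$ into $\policyb(a'|s') + |\actionSet|\eta$ in both the dynamics-gap bracket and the propagation factor; this is precisely how the aggregate term $|\actionSet|\eta$ enters Equation~\eqref{eq:error}. Finally, since the statement is about the worst case, I take the supremum over $\dataset$, using $\delta(s,a) = \sup_\dataset \error^\policy_\dataset(s,a)$ on the left and the definitions of $\delta$ and $\eta$ to dominate the dataset-dependent quantities on the right, which closes the recursion and lands the stated inequality.

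The hard part will be the bookkeeping in the add-and-subtract step together with careful sign tracking so the bound lands in the claimed $\geq$ direction rather than its reverse: the generative slack $\eta$ must be routed through the Bellman backup without being double-counted against the dynamics gap $(p^*-p_\dataset)$, and the successor policy mismatch between the optimal action selection implicit in $Q^*$ and the cloned policy $\policyc$ must be folded consistently into the same $\eta$ budget. The propagation term is the most delicate point, since it reintroduces the approximation error inside the recursion; I would need to argue that taking $\sup_\dataset$ is legitimate even though both $Q^\policy_\dataset$ and $p_\dataset$ depend on $\dataset$, so that the worst-case error at the successor states can be absorbed into $\delta$ self-consistently.
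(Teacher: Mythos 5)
Your overall route matches the paper's: expand both $Q$'s via their Bellman equations, add and subtract the intermediate term $\gamma\sum_{s'}p^*(s'|s,a)V^\policy_\dataset(s')$ to split the difference into a dynamics-gap term weighted by $(p^*-p_\dataset)$ and a propagation term that reintroduces $\error^\policy_\dataset(s',a')$, then substitute $\policy(a'|s')=\policyb(a'|s')+\errorg(a'|s')$, bound $\sum_{a'}\errorg(a'|s')$ by $|\actionSet|\eta$, and take the supremum over $\dataset$. That is exactly the paper's derivation (given in its appendix), including the observation that the reward can be carried inside the bracket.

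The one place where your plan diverges, and where it would actually fail, is your starting point for $Q^*$: you take it to satisfy the \emph{optimal} Bellman equation, with greedy action selection at the successor state, and then hope to fold the mismatch between that greedy policy and the cloned policy into the same $\eta$ budget. That cannot work: the gap between $\arg\max_{a'}Q^*(s',a')$ and $\policyb(\cdot|s')$ is a property of how suboptimal the behavioral policy is, not of the generative modeling error, and it is not controlled by $\eta=\sup_\dataset\errorg(a|s)$. The paper sidesteps this entirely by expanding \emph{both} $Q^*(s,a)$ and $Q^\policy_\dataset(s,a)$ as policy evaluations of the \emph{same} policy $\policy$ (so $Q^*$ here denotes the true value of $\policy$ under the true dynamics $p^*$, despite the star notation), which makes the successor expectations align and lets the add-and-subtract step produce only the two terms you want. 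If you adopt that convention your decomposition goes through verbatim; if you keep the optimal Bellman operator, you are proving a different (and harder) statement and the $|\actionSet|\eta$ term will not absorb the extra policy-mismatch error. Your closing worry about whether $\sup_\dataset$ can be applied self-consistently when both $Q^\policy_\dataset$ and $p_\dataset$ depend on $\dataset$ is legitimate, but the paper does not address it either — it simply replaces the dataset-dependent quantities by their suprema and asserts the inequality.
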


\begin{proof}
Firstly, we have the following:
\begin{equation}
\begin{aligned}
    & \error^\policy_\dataset(s,a) =   \sum\limits_{s'} (p^*(s'|s, a) - p_\dataset(s'|s, a)) \\
    & \cdot [r(s,a)+\gamma\sum\limits_{a'}(\policyb(a'|s') + \errorg(a'|s')) Q^\policy_\dataset(s',a') ] \\
 & + p^*(s'|s, a)\cdot \gamma \sum\limits_{a'}(\policyb(a'|s') + \errorg(a'|s'))\error^\policy_\dataset(s',a')
\end{aligned}
\end{equation}

The proof follows by expanding each Q, rearranging terms, simplifying the expression and then representing cloned policy $\policy$ with behavior policy $\policyb$ with a generative error $\errorg$.

Based on above equation, we take the supremum of $\error^\policy_\dataset(s,a)$ and have the following:

\begin{equation}
\label{eq:error-sup}
\begin{aligned}
& \sup_\dataset \error^\policy_\dataset(s,a) \geq \sum\limits_{s'} (p^*(s'|s, a) - p_\dataset(s'|s, a)) \\
    & \cdot [r(s,a)+\gamma(\sum\limits_{a'}(\policyb(a'|s')+ |\actionSet|\eta ) Q^\policy_\dataset(s',a') ] \\
 & + p^*(s'|s, a)\cdot \gamma (\sum\limits_{a'}\policyb(a'|s')+ |\actionSet|\eta ) \error^\policy_\dataset(s,a) \\
\end{aligned}
\end{equation}

\end{proof}

For a finite, deterministic MDP, if all possible state transitions are captured in $\dataset$, $p_\dataset(s'|s, a)$ will be equivalent to $p^*(s'|s, a)$, we will have $\error^\policy_\dataset(s,a)=0$. However, in infinite or stochastic MDP, it might require an infinite number of samples to cover the true distribution. Therefore, $\delta(s,a)=0$ if and only if the following strong assumptions holds: the true MDP is finite and deterministic, and all possible state transitions are captured in $\dataset$. Otherwise, we have $\delta(s,a)>0$.

From Theorem~\ref{eq:qvalue-errorbound}, we have $\delta(s,a)$ scales as $O(\eta|\actionSet|)$, where $\eta(s,a) = \sup_\dataset \errorg(a|s) $.  Intuitively, this means we can prevent an increase in the \qvalue error by learning a generative model $\gmodel$ with smaller $\errorg$. Meanwhile, for settings where the action space is small, the $\errorg$ are will have smaller influences in inducing the \qvalue error. Overall, in the time horizon $H$, since the generative error $\errorg$ scales as $O(H^2)$, $\delta(s,a) = \sup_\dataset \error^\policy_\dataset(s,a)$ scales as $O(|\actionSet| H^2)$.

\section{Residual Generative Modeling}
\label{sec:residual-gen}
We begin by analyzing the theoretical properties of residual generative modeling in a deterministic policy setting, where we are able to measure the monotonically decreasing loss over sampled actions precisely. We then introduce our deep reinforcement learning model in detail, by drawing inspirations from the deterministic analog.	

\subsection{Addressing Generative Modeling Error with Residual Learning}
\label{sec:model-addressing}
In this section, we will provide a short theoretical illustration of how residual learning can be used in addressing the generative modeling under deterministic policies. In this example, we will assume that $ \rho_{\phi}(s) $  is the original generative model in mimicking the policy  $ \policyb $  in dataset  $\dataset$, and the residual generative model is denoted as
$ \hat{a} = G_{\omega}(s,\rho_{\phi}(s)) $, where $\omega$ stands for the parameters used to combine input $s$ and generative model $ \rho_{\phi}(s) $.


Without loss of generality, we can re-write the final output layer by assuming that $ G_{\omega} $  is parameterized by a linear weight vector  $ \mathbf{w} $  and a weight matrix  $ \mathbf{M} $, and previous layers can be represented as $G_{\omega_2}(s)$. Thus, $\hat{a}$ can be denoted as $\hat{a} = \mathbf{w}^{T}(\mathbf{M}G_{\omega_2}(s)+ \rho_{\phi}(s))$. 
We have the following result from~\cite{shamir2018resnets}:

\begin{lemma}
Suppose we have a function defined as
\begin{equation}
    \Gamma_{\psi} (\mathbf{a}, \mathbf{B}) \doteq \Gamma (\mathbf{a}, \mathbf{B}, \psi) \doteq \mathbb{E}_{\mathbf{x}, y} \left[\mathit{l} \left(\mathbf{a}^T \left(H(\mathbf{x}) + \mathbf{B} F_{\psi} (\mathbf{x} \right), y \right) \right] 
\end{equation}
where $\mathit{l}$ is the defined loss, $\mathbf{a}$, $\mathbf{B}$ are weight vector and matrix respectively, and $\psi$ is the parameters of a neural network. Then, every local minimum of $\Gamma$ satisfies
\begin{equation}
    \Gamma (\mathbf{a}, \mathbf{B}, \psi) \leq \inf_{\mathbf{a}} \Gamma (\mathbf{a}, \mathbf{0}, \psi)
\end{equation}
if following conditions are satisfied:
(1) loss $l (\hat{y}, y)$ is twice differentiable and convex in $\hat{y}$;
(2) $\Gamma_{\psi} (\mathbf{a}, \mathbf{B})$, $\bigtriangledown \Gamma_{\psi} (\mathbf{a}, \mathbf{B})$, and $\bigtriangledown^2 \Gamma_{\psi} (\mathbf{a}, \mathbf{B})$ are Lipschitz continuous in ($\mathbf{a}$, $\mathbf{B}$).
\end{lemma}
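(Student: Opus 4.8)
The plan is to show that at every local minimum $(\mathbf{a}, \mathbf{B})$ of $\Gamma_\psi$ the stated bound holds, following the argument of~\cite{shamir2018resnets}, by combining first-order stationarity with convexity of the loss, and by treating the degenerate point $\mathbf{a} = \mathbf{0}$ separately via a second-order descent argument. Throughout I write $p = \mathbf{a}^T(H(\mathbf{x}) + \mathbf{B}F_\psi(\mathbf{x}))$ for the prediction and let $l'$, $l''$ be the first two derivatives of $l(\cdot, y)$ in its prediction argument. A preliminary observation is that the skip-only objective $\mathbf{a} \mapsto \Gamma_\psi(\mathbf{a}, \mathbf{0}) = \mathbb{E}[l(\mathbf{a}^T H(\mathbf{x}), y)]$ is convex, since $l$ is convex in the prediction (condition 1) and the prediction is linear in $\mathbf{a}$. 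The workhorse is the tangent-line inequality for convex $l$: for any target weight $\tilde{\mathbf{a}}$ we have $l(\tilde{\mathbf{a}}^T H, y) \geq l(p, y) + l'(p,y)(\tilde{\mathbf{a}}^T H - p)$, so after taking expectations $\Gamma_\psi(\tilde{\mathbf{a}}, \mathbf{0}) \geq \Gamma_\psi(\mathbf{a}, \mathbf{B}) + \tilde{\mathbf{a}}^T \mathbb{E}[l'(p,y)H] - \mathbb{E}[l'(p,y)p]$. Hence it suffices to prove the two identities $\mathbb{E}[l'(p,y)H] = \mathbf{0}$ and $\mathbb{E}[l'(p,y)p] = 0$ at the local minimum; taking the infimum over $\tilde{\mathbf{a}}$ then gives exactly $\Gamma_\psi(\mathbf{a}, \mathbf{B}) \leq \inf_{\tilde{\mathbf{a}}}\Gamma_\psi(\tilde{\mathbf{a}}, \mathbf{0})$.

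Next I would write down stationarity at the local minimum, namely $\nabla_{\mathbf{a}}\Gamma = \mathbb{E}[l'(p,y)(H + \mathbf{B}F_\psi)] = \mathbf{0}$ and $\nabla_{\mathbf{B}}\Gamma = \mathbf{a}\,\mathbb{E}[l'(p,y)F_\psi^T] = \mathbf{0}$. In the generic case $\mathbf{a} \neq \mathbf{0}$, the outer-product structure of the second condition forces $\mathbb{E}[l'(p,y)F_\psi] = \mathbf{0}$; substituting into the first condition then gives $\mathbb{E}[l'(p,y)H] = \mathbf{0}$. Together these yield $\mathbb{E}[l'(p,y)p] = \mathbf{a}^T\mathbb{E}[l'H] + \mathbf{a}^T\mathbf{B}\,\mathbb{E}[l'F_\psi] = 0$, so both required identities hold and this case closes using only first-order information and convexity, with no appeal to condition (2).

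The main obstacle is the degenerate case $\mathbf{a} = \mathbf{0}$, where $p \equiv 0$, the $\mathbf{B}$-gradient vanishes identically, and first-order information is too weak — this is precisely where condition (2) (Lipschitz continuity of $\Gamma_\psi$, its gradient, and its Hessian) is needed. Here $\Gamma_\psi(\mathbf{0}, \mathbf{B}) = \mathbb{E}[l(0,y)]$ is independent of $\mathbf{B}$ and $\mathbb{E}[l'(0,y)p] = 0$ holds trivially, so it remains only to show $\mathbb{E}[l'(0,y)H] = \mathbf{0}$. I would obtain this by exhibiting a joint second-order descent direction that exploits the bilinear term $\mathbf{a}^T\mathbf{B}F_\psi$: perturb $\mathbf{a} \to \delta\mathbf{u}$ and $\mathbf{B} \to \mathbf{B} + \delta\mathbf{V}$, expand $\Gamma_\psi$ to second order, and use the Lipschitz Hessian to bound the Taylor remainder by a uniform $O(\delta^3)$. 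The linear-in-$\delta$ term vanishes by $\mathbf{a}$-stationarity, and the $\delta^2$ coefficient works out to $\mathbf{u}^T\mathbf{V}\,\mathbb{E}[l'(0,y)F_\psi] + \tfrac{1}{2}\mathbb{E}[l''(0,y)(\mathbf{u}^T(H+\mathbf{B}F_\psi))^2]$. Writing $\mathbf{g} := \mathbb{E}[l'(0,y)F_\psi]$, if $\mathbf{g} \neq \mathbf{0}$ then choosing the rank-one perturbation $\mathbf{V} = -c\,\mathbf{u}\mathbf{g}^T$ makes the first term equal to $-c\|\mathbf{u}\|^2\|\mathbf{g}\|^2$, which for $c$ large enough dominates the fixed nonnegative second term (nonnegative since $l'' \geq 0$ by convexity); fixing such a $c$ and letting $\delta \to 0$ then produces a strictly negative change, contradicting local minimality. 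Hence $\mathbf{g} = \mathbb{E}[l'(0,y)F_\psi] = \mathbf{0}$, and combined with the $\mathbf{a}$-stationarity condition $\mathbb{E}[l'(0,y)(H + \mathbf{B}F_\psi)] = \mathbf{0}$ this gives $\mathbb{E}[l'(0,y)H] = \mathbf{0}$. Feeding both identities back into the tangent-line inequality closes the degenerate case, and the two cases together establish the lemma.
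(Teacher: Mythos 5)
The paper does not actually prove this lemma: it is imported verbatim from the cited reference (Shamir, ``Are ResNets provably better than linear predictors?'', 2018) with the remark ``we have the following result from [shamir2018resnets]'', so there is no in-paper argument to compare against. Your write-up is a correct reconstruction of the proof in that source, and it follows the same route: the tangent-line inequality for the convex loss reduces the claim to the two moment identities $\mathbb{E}[l'(p,y)H]=\mathbf{0}$ and $\mathbb{E}[l'(p,y)p]=0$; the case $\mathbf{a}\neq\mathbf{0}$ closes from first-order stationarity alone because the outer-product structure of $\nabla_{\mathbf{B}}\Gamma=\mathbf{a}\,\mathbb{E}[l'F_{\psi}^{T}]$ forces $\mathbb{E}[l'F_{\psi}]=\mathbf{0}$; and the degenerate case $\mathbf{a}=\mathbf{0}$ is handled by the rank-one second-order perturbation $\mathbf{V}=-c\,\mathbf{u}\mathbf{g}^{T}$, whose cross term $-c\|\mathbf{u}\|^{2}\|\mathbf{g}\|^{2}$ dominates the fixed nonnegative curvature term for large $c$, contradicting local minimality unless $\mathbf{g}=\mathbf{0}$. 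This is exactly where condition (2) earns its keep (uniform $O(\delta^{3})$ control of the Taylor remainder), and you identify that correctly. The only points I would tighten are bookkeeping: the interchange of differentiation and expectation used to write $\nabla_{\mathbf{a}}\Gamma$ and $\nabla_{\mathbf{B}}\Gamma$ as expectations of pointwise gradients should be stated as following from the regularity postulated in condition (2), and a local minimum need not be an unconstrained stationary point unless the domain is open, which is implicit here. Neither affects the substance; the argument is sound and matches the cited proof rather than offering a genuinely different route.
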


\begin{theorem}
\label{theo:residual}
When using log loss or squared loss for deterministic policy and linear or convolution layers for $F_{\omega_2}$, every local optimum of  $ \hat{a} = \mathbf{w}^{T}(\mathbf{M}G_{\omega_2}(s)+ \rho_{\phi}(s))  $ will be no worse than  $ \hat{a} = \mathbf{w}^{T}\rho_{\phi}(s)  $ . 
\end{theorem}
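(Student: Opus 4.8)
The plan is to apply the preceding Lemma from~\cite{shamir2018resnets} essentially verbatim, by exhibiting our residual generative model as a special case of the abstract ResNet-style functional $\Gamma_\psi(\mathbf{a},\mathbf{B})$. The correspondence I would set up is: identify the input $\mathbf{x}$ with the state $s$ and the label $y$ with the target action supplied by the deterministic behavioral policy $\policyb$; identify the skip term $H(\mathbf{x})$ with the base generative model $\rho_\phi(s)$, the residual branch $F_\psi(\mathbf{x})$ with the earlier layers $G_{\omega_2}(s)$ (so $\psi=\omega_2$), and the trainable output parameters $\mathbf{a},\mathbf{B}$ with $\mathbf{w},\mathbf{M}$. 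Under this dictionary the training objective $\EXP_{s,a}[l(\mathbf{w}^{T}(\mathbf{M}G_{\omega_2}(s)+\rho_\phi(s)),a)]$ is exactly $\Gamma_\psi(\mathbf{w},\mathbf{M})$, and we consider local optima over the output parameters $(\mathbf{w},\mathbf{M})$ for each fixed $\psi=\omega_2$, which is precisely the regime in which the Lemma is stated.

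First I would verify that the two hypotheses of the Lemma hold here. For condition (1), both admissible losses qualify: the squared loss $l(\hat{y},y)=(\hat{y}-y)^2$ is everywhere twice differentiable and strictly convex in $\hat{y}$, and for a deterministic policy the log loss collapses to a negative-log-likelihood / cross-entropy term that is likewise twice differentiable and convex in the prediction. For condition (2), I would invoke the structural restriction that $F_{\omega_2}$ (equivalently $G_{\omega_2}$) is a linear or convolutional map — hence a bounded linear operator — together with boundedness of $\rho_\phi(s)$ and of the action targets, to argue that $\Gamma_\psi$ and its first and second derivatives in $(\mathbf{w},\mathbf{M})$ are Lipschitz continuous.

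With both conditions in place, the Lemma yields that every local minimum $(\mathbf{w},\mathbf{M})$ of $\Gamma_\psi$ satisfies $\Gamma(\mathbf{w},\mathbf{M},\psi)\leq \inf_{\mathbf{w}}\Gamma(\mathbf{w},\mathbf{0},\psi)$. The final step is interpretation: setting $\mathbf{M}=\mathbf{0}$ annihilates the residual branch, so that $\mathbf{M}G_{\omega_2}(s)=\mathbf{0}$ and $\hat{a}=\mathbf{w}^{T}\rho_\phi(s)$; consequently $\inf_{\mathbf{w}}\Gamma(\mathbf{w},\mathbf{0},\psi)$ is exactly the best loss attainable by the non-residual predictor $\hat{a}=\mathbf{w}^{T}\rho_\phi(s)$. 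Since the loss at any local optimum of the residual model is bounded above by this quantity, every local optimum of $\hat{a}=\mathbf{w}^{T}(\mathbf{M}G_{\omega_2}(s)+\rho_\phi(s))$ is no worse than the non-residual baseline, which is the assertion.

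I expect the main obstacle to be condition (2) — establishing Lipschitz continuity of the functional, its gradient, and its Hessian jointly in $(\mathbf{w},\mathbf{M})$. This is exactly where the restriction to linear/convolutional $F_{\omega_2}$ and the choice of well-behaved losses do the real work: it forces control on the curvature of $l$ composed with a bounded linear map, and for the log loss in particular one must keep the prediction in a region where the second derivative stays bounded, which requires care about the boundedness of the feature maps $G_{\omega_2}(s)$, $\rho_\phi(s)$, and of the action targets. By contrast, the architecture-to-functional matching is conceptually routine, but it must be stated precisely enough that the residual-free model corresponds \emph{exactly} to the $\mathbf{M}=\mathbf{0}$ slice, since the whole conclusion hinges on that identification.
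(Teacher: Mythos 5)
Your proposal follows essentially the same route as the paper: instantiate the preceding lemma with $H(\mathbf{x})=\rho_\phi(s)$, $F_\psi=G_{\omega_2}$, $(\mathbf{a},\mathbf{B})=(\mathbf{w},\mathbf{M})$, verify the two hypotheses, and read the $\mathbf{M}=\mathbf{0}$ slice as the non-residual predictor. You are in fact more careful than the paper on condition (2): the paper only asserts Lipschitz continuity of the network $G_{\omega_2}$ itself (citing a Lipschitz-estimation result) and never actually verifies Lipschitzness of $\Gamma_\psi$, $\bigtriangledown\Gamma_\psi$, and $\bigtriangledown^2\Gamma_\psi$ jointly in $(\mathbf{w},\mathbf{M})$, which you correctly flag as the step requiring boundedness of the feature maps and targets.
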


\begin{proof}
For deterministic policies, $G_{\omega_2}(\mathbf{x}) $ is Lipschitz continuous when using linear or convolution layers~\cite{virmaux2018lipschitz}. Since log loss and squared loss are twice differentiable in $ a $, we have 
\begin{equation}
\Gamma(\mathbf{w}, M, \theta) \leq \text{inf}_\mathbf{w} \Gamma(\mathbf{w}, 0, \theta) \end{equation}
where $\Gamma_{\psi} (\mathbf{a}, \mathbf{B}) = \Gamma (\mathbf{a}, \mathbf{B}, \psi) \doteq \mathbb{E}_{\mathbf{x}, y} [l (\mathbf{a}^T (H(\mathbf{x}) + \mathbf{B} F_{\theta} (\mathbf{x}) ), y ) ] $,  $ \mathit{l} $  is the defined loss,  $ \mathbf{a} $ , $ \mathbf{B} $  are weight vector and matrix respectively, and  $ \theta $  is the parameters of a neural network.

That is, the action-conditioned model has no spurious local minimum that is above that of the original generative model  $ \rho_{\phi}(s) $. 
\end{proof}

\subsection{Residual Generative Model}
The main difference between our method and existing offline RL methods is that we design a residual modeling part for the generative model when approximating the $\policy_\dataset$. Therefore, in this section, we mainly introduce our approach to offline reinforcement learning, \ours (action-conditioned Q-learning), which uses action-conditioned residual modeling to reduce the generative modeling error.

Our generative model consists of two major components: a conditional variational auto-encoder (VAE) that models the distribution by transforming an underlying latent space, and a residual neural network that models the state-action distribution residuals on the output of the VAE.


\subsubsection{Conditional VAE} 
To model the generative process of predicting actions given certain states, analogous to existing literatures like Batch-Constrained Q-learning (\bcq)~\cite{fujimoto2019off} and Bootstrapping error reduction (\bear)~\cite{kumar2019stabilizing}, we use a conditional VAE that takes state and action as input and outputs the reconstructed action. 
Given the raw state, we first embed the state observation with a state embedding module in conditional VAE. Then in the \textit{encoder} part, we concatenate the state embedding with action input and output the distribution of latent space (assumed to be Gaussian for continuous action space and Categorical for discrete action space). In the \textit{decoder} part, we concatenate the state embedding and the latent variable $z$ from the learned distribution and outputs the reconstructed action. The overall training loss for the conditional VAE is:
\begin{equation}
\loss_{VAE} = - \EXP_{z\sim q(z|\state,\action)}[\log p(\action|\state, z)] + D_{KL}(q(z|\state,\action)||p(z))
\end{equation}
where the first term is the reconstructed loss, and the second term is the regularizer that constrains the latent space distribution, $\state$ is the state input to the conditional VAE, $\action$ is the action in the dataset in pair with $\state$.

\subsubsection{Residual Network}
Unlike \bcq and \bear which take VAE or a single feed-forward neural network as the generative model, we propose to use the reconstructed action from VAE as an additional input to learn the residual of action output. This residual mechanism is motivated to boost offline RL by reducing the generative model's error. The overall loss for the generative network is:

\begin{equation}
\label{eq:loss-total}
\begin{aligned}
\loss_{\omega} = \EXP_{\omega}[d(\hat{\action},\action)] +  \loss_{VAE}
\end{aligned}
\end{equation}
where $\hat{\action}=\gmodel$ is the output of the residual network, and $d(\hat{\action},\action)$ is the distance measure between two actions. For continuous actions, $d$ could be defined as the mean squared error; for discrete actions, $d$ could be defined as the cross-entropy. Intuitively, this loss function includes the original generative modeling loss function (usually treated the same as a behavior cloning loss) and a VAE loss, optimized at the same time.


\subsection{Training Process}
We now describe the practical offline RL method based on \bear, a similar variant on \bcq or other methods can be similarly derived. Empirically we find that our method based on \bear performs better. We've described our generative model in previous sections, here we briefly introduce the other part of the offline RL algorithm, i.e., value function approximation process, which is similar to \bcq and \bear. To compute the target Q-value for policy improvement, we use the weighted combination of the maximum and the minimum Q-value of $K$ state-action value functions, which is also adopted in \bcq and \bear and shown to be useful to penalize uncertainty over future states in existing literature~\cite{fujimoto2019off}: 
\begin{equation}
\label{eq:target-value}
\begin{aligned}
y(s,a) & = r+ \gamma \max_{a_i}[\lambda \min_{j=1,\dots,K} Q_{\theta'_j}(s',a_i) \\
       & + (1-\lambda)\max_{j=1,\dots,K}Q_{\theta'_j}(\state',\action_i)]
\end{aligned}
\end{equation}
where $s'$ is the next state of current state $s$ after taking action $a$, $\theta'_{j}$ is the parameters for target network, $\gamma$ is the discount factor in Bellman Equation, $\lambda$ is the weighting factor.

Following \bear, we define the generative modeling update process as a constrained optimization problem, which tries to improve the generator and constrains  the policy within a threshold $\epsilon$:

\begin{equation}
\label{eq:gmodel-loss}
\begin{aligned}
 G_\omega = \max_{G_\omega}\EXP_{(s,a,r,s')\sim \dataset} [& \EXP_{\hat{\action} \sim \gmodel }\min_{i=1,\dots,K}Q_i(s,\hat{\action}) \\
  & - \alpha ( \loss_{\gmodel} - \epsilon)]
\end{aligned}
\end{equation}
where $\alpha$ is the tradeoff factor that balances constraining the policy to the dataset and optimizing the policy value, and can be automatically tuned via Lagrangian dual gradient descent for continuous control and is fixed as a constant for discrete control.

This forms our proposed offline RL method, which consists of three main parameterized components: a generative model $\gmodel$, Q-ensemble $\{Q_{\theta_i}\}^K_{i=1}$, target Q-networks $\{Q_{\theta'_i}\}^K_{i=1}$ and target generative model $G_{\omega'}$. 
In the following section, we demonstrate \ours results in mitigating the generative modeling error and a strong performance in the offline RL setting.

\section{Experiment}

\begin{figure*}[tbh]
\centering
  \begin{tabular}{ccc}
  \includegraphics[width=.3\linewidth]{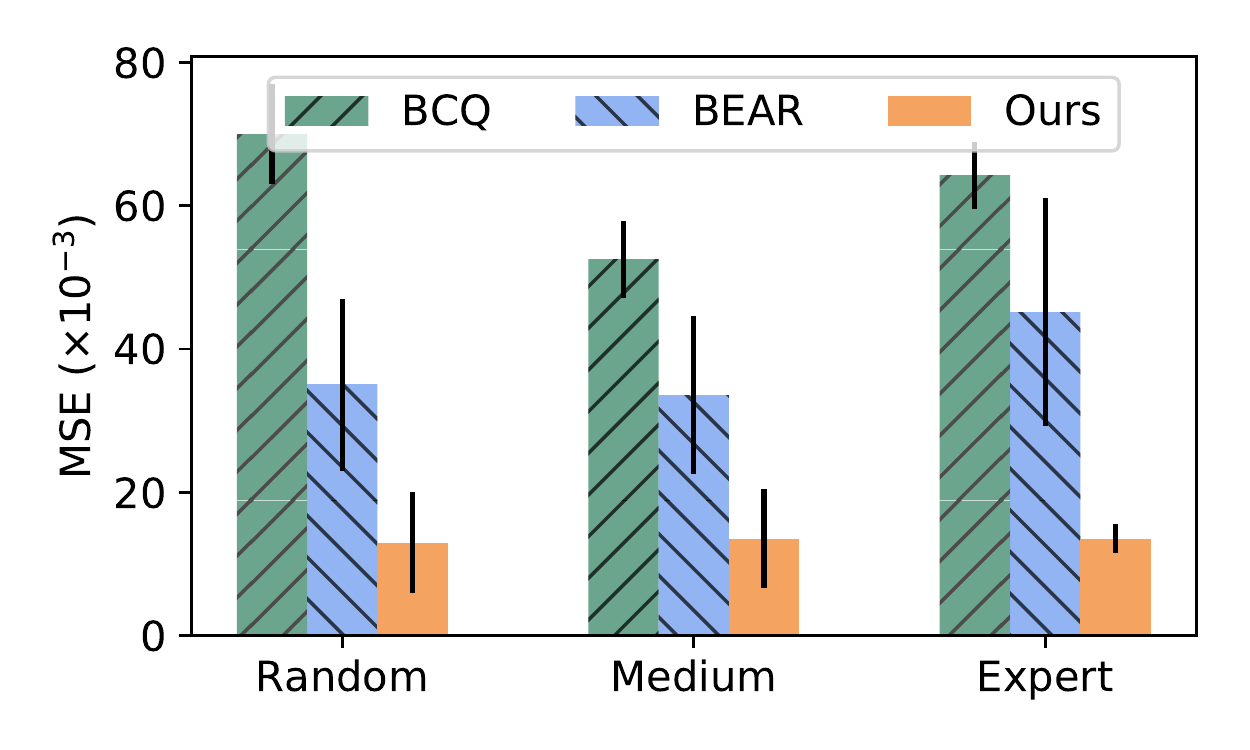} &
  \includegraphics[width=.3\linewidth]{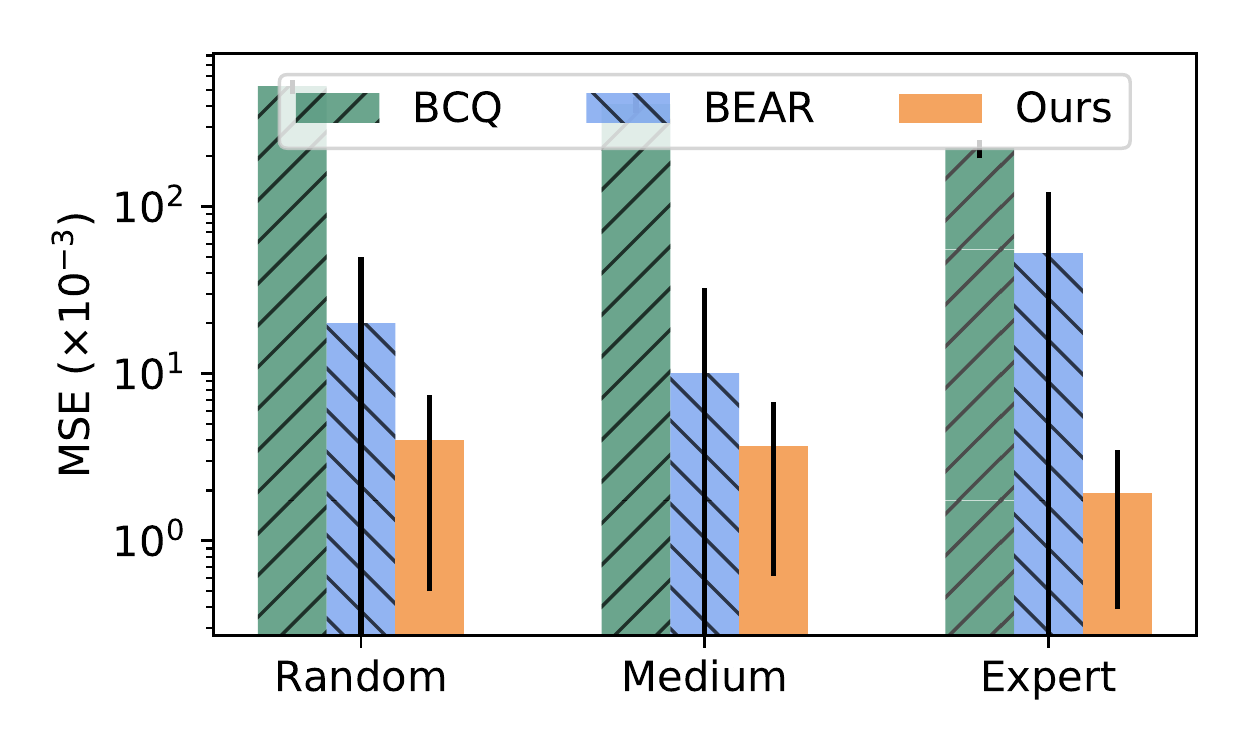} &
  \includegraphics[width=.3\linewidth]{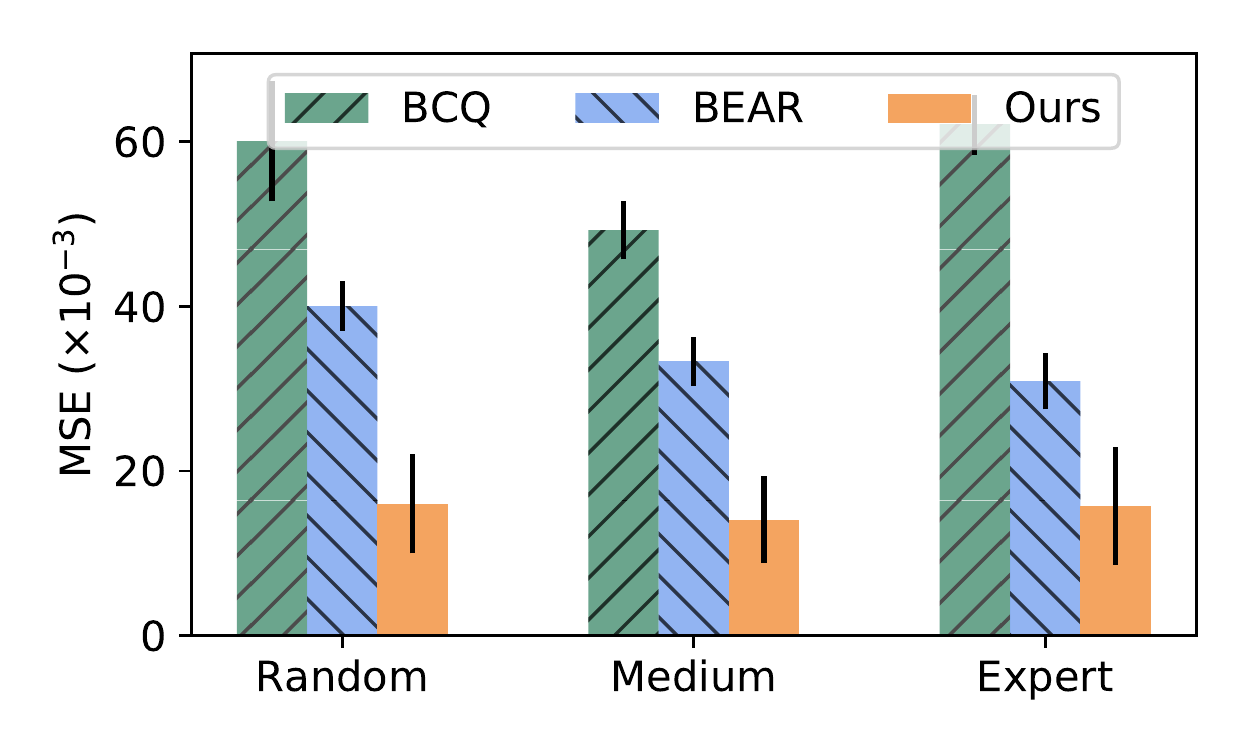}\\
  (a) \hopper & (b) \cheetah  & (b) \walker \\
  \end{tabular}
 \caption{The average (with standard deviation) of the mean squared error between $a\sim\dataset(s)$ and $\hat{\action}\sim \gmodel$ from generative model. Note that the y-axis of \cheetah~ is in log scale. The lower, the better. All the results are reported over last 500 training batches. With residual network, our proposed method can largely reduce the error.}
    \label{fig:error}
\end{figure*}

\begin{table*}[ht]
\centering
\small
\begin{tabular}{@{}ccccccccc|c@{}}
\toprule
                  & \bc                & \sac       & \ddpg    & \bcq              & \bear               & \bracp          & \bracv    & \cql     & \ours \\ \midrule
\cheetah-random   & -17.9             & \textbf{3502.0} & 209.22  & -1.3             & 2831.4             & 2713.6          & 3590.1  & \textbf{3902.4} & \textbf{3053.1}      \\
\hopper-random    & 299.4             & 347.7         & 62.13   & 323.9            & \textbf{349.9}     & 337.5           & \textbf{376.3}   & 340.7 & \textbf{379.2}       \\
\walker2d-random  & 73.0                & 192.0           & 39.1    & 228.0     & \textbf{336.3}     & -7.2            & 87.4     & \textbf{346.6}        & \textbf{350.3}       \\
\cheetah-medium   & 4196.4            & -808.6        & -745.87 & 4342.67          & 4159.08            & \textbf{5158.8} & \textbf{5473.8}  & \textbf{5236.8} & 4397.2      \\
\walker2d-medium  & 304.8             & 44.2          & 4.63    & 2441             & 2717.0      & \textbf{3559.9} & \textbf{3725.8}  & \textbf{3487.1} & 1763.5             \\
\hopper-medium    & 923.5             & 5.7           & 10.19   & \textbf{1752.4}  & 1674.5             & 1044.0            & 990.4    & \textbf{1694.0}        & \textbf{1768.1}     \\
\cheetah-expert   & \textbf{12984.5} & -230.6      & -649.1 & 10539.1         & \textbf{13130.1}  & 461.13         & -133.5     & 12189.9    & \textbf{12303.4}    \\
\hopper-expert   & \textbf{3525.4} & 22.6       & 48.2   & 3410.5          & \textbf{3567.4} & 213.5        & 119.7    & 3522.6    & \textbf{3976.4}      \\
\walker2d-expert  & \textbf{3143.9}  & -13.8      & 9.8    & \textbf{3259.2} & 3025.7            & -9.2        & 0.0     & 143.3           & \textbf{3253.5}      \\
\bottomrule
\end{tabular}
\caption{Performance of \ours and prior methods on gym domains from D4RL, on the unnormalized return metric, averaged over three random seeds, with \textbf{top-3} emphasized. While \bcq, \bear, \brac and \cql perform unstably across different scenarios, \ours consistently performs similarly or better than the best prior methods. }
\label{tab:result-all}
\end{table*}

We compare \ours to behavior cloning, off-policy methods and prior offline RL methods on a range of dataset compositions generated by (1) completely random behavior policy, (2) partially trained, medium scoring policy, and (3) an optimal policy. 
Following~\cite{fujimoto2019off,kumar2019stabilizing,wu2019behavior}, we evaluate performance on four Mujoco~\cite{todorov2012mujoco} continuous control environments in OpenAI Gym~\cite{brockman2016openai}: HalfCheetah-v2, Hopper-v2, and Walker2d-v2. We evaluate offline RL algorithms by training on these fixed datasets provided by open-access benchmarking dataset D4RL~\cite{fu2020d4rl} and evaluating the learned policies on the real environments. The statistics for these datasets can be found in~\cite{fu2020d4rl}.

\paragraph{Baselines.}
We compare with off-policy RL methods - Deep Deterministic Policy Gradient (\ddpg)~\cite{lillicrap2015continuous} and Soft Actor-Critic (\sac)~\cite{haarnoja2018soft}), Behavior Cloning (\bc)~\cite{ross2010efficient}, and state-of-the-art off-policy RL methods, including \bcq~\cite{fujimoto2019off}, \bear~\cite{kumar2019stabilizing}, Behavior Regularized Actor Critic with policy (\bracp) or value (\bracv) regularization~\cite{wu2019behavior}, and Conservative Q-Learning (\cql)~\cite{kumar2020conservative}. We use the open-source codes provided in corresponding methods and keep the same parameters in our proposed method for a fair comparison.

\paragraph{Experimental settings.}
To keep the same parameter settings as \bcq and \bear, we set $K=2$ (number of candidate Q-functions), $\lambda=0.75$ (minimum weighting factor), $\epsilon=0.05$ (policy constraint threshold), and $B=1000$ (total training steps). We report the average evaluation return over three seeds of the learned algorithm's policy, in the form of a learning curve as a function of the number of gradient steps taken by the algorithm. The samples collected during the evaluation process are only used for testing and not used for training. 

\subsection{Results}
\paragraph{Effectiveness of mitigating generative modeling error.}
In previous sections, we argued in favor of using the residual generative modeling to decrease the generative modeling error. Revisiting the argument, in this section, we investigate the empirical results on the error between true $\action$ at state $\state$ from $\dataset$ and the generated action $\hat{a}$ from $\gmodel$. In \bcq, it uses a vanilla conditional VAE as $\gmodel$; \bear use a simple feed-forward neural network. Our proposed method combines these two models with a residual network.

Figure~\ref{fig:error} shows the comparison of the error, with the mean and standard deviation from the last 500 training batches of each method. We have the following observations:
~\noindent\\$\bullet$ Both \ours and \bear have a lower error in generative modeling than \bcq, and both \ours and \bear keep the errors below $0.05$ in most cases. This is because they use the dual gradient descent to keep the target policy constrained below the threshold $\epsilon$ that is set as 0.05, while \bcq does not have this constraint.
~\noindent\\$\bullet$ Our proposed method \ours has a consistent lower error than \bear. This is because \ours uses residual learning to mitigate the generative modeling error, which matches our analysis, as suggested by Theorem~\ref{theo:residual}.

\paragraph{Effectiveness of boosting offline RL.}
As analyzed in previous sections, we can prevent an increase in the \qvalue error by learning a generative model $\gmodel$ with smaller $\errorg$, and thus learn a better offline RL policy. Here, we investigate the effectiveness of the proposed method on boosting the performance of offline RL.

Table~\ref{tab:result-all} shows the comparison of our proposed method over behavior cloning, off-policy methods, and state-of-the-art offline RL methods. We have the following observations:
~\noindent\\$\bullet$ Off-policy methods (\ddpg and \sac)  training in an purely offline setting yields a bad performance in all cases. This is due to the incorrect estimation of the value of OOD actions, which matches the existing literature.
~\noindent\\$\bullet$ Offline RL methods can outperform \bc under datasets generated by the random and medium policy. This is because \bc simply mimicking the policy behavior from the dataset without the guidance of state-action value. Since the dataset $\dataset$ is generated by a non-optimal policy, the policy learned by \bc could generate non-optimal actions. This non-optimality could accumulate as the policy rolls out in the environments.
~\noindent\\$\bullet$ \ours performs similarly or better than the best prior methods in most scenarios. We noticed that \brac and \cql yields better performance in random and medium data than in expert data. This is because under expert data, the variance of the state-action distribution in the dataset might be small, and simply mimicking the behavior policy could yield satisfactory performance (like \bc). While \brac and \cql does not have any specific design to reduce the generative error, \ours has a better generative model to mimicking the distribution of $\dataset$ and thus consistently performs as one of the best methods cross most settings. As analyzed in previous sections, we can lower the error of state-action value estimation, thus boost the offline RL performance.

We also noticed that although the overall performance of \ours is slightly better than \cql, since \cql has additional constraints that \ours does not consider. We added the constraints of \cql into \ours and found out that \ours can further improve on \cql in all cases, which means the effectiveness of boosting offline RL methods by reducing the generative error.

\subsection{Case Study: Honor of Kings 1v1}
An open research problem for existing offline RL methods is the lack of evaluations on complex control tasks~\cite{levine2020offline,wu2019behavior} with large action spaces. Therefore, in this paper, we further implement our experiment into a multiplayer online battle arena (MOBA) game, the 1v1 version of \textit{Honor of Kings} (the most popular and widely studied MOBA game at present~\cite{ye2020towards,ye2020mastering,chen2020heroes}). Compared with traditional games like Go or Atari, \textit{Honor of Kings} 1v1 version has larger state and action space and more complex control strategies. A detailed description of the game can be found in~\cite{ye2020mastering}.

\paragraph{Baselines.} Since the action space is discrete, we compare our method with \dqn~\cite{mnih2013playing} and the discrete version of \bcq~\cite{fujimoto2019off} instead of \bear. Compared with \dqn, \bcq only adds one generative model to learn the distribution of $\dataset$. For \ours, we add an residual network upon \bcq as the discrete version of our proposed method. We use the same network parameters and training parameters for all the baseline methods for a fair comparison.

\begin{figure}[t!]
\centering
  \includegraphics[width=.95\linewidth]{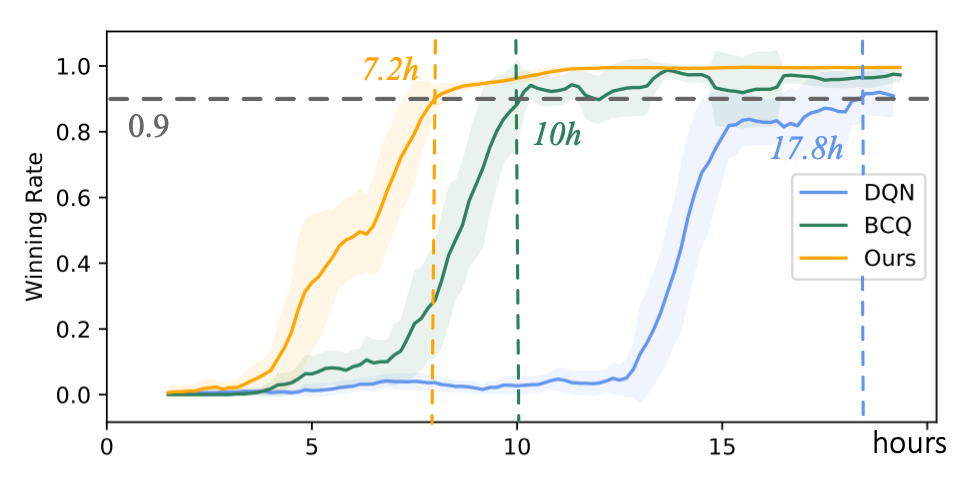}
 \caption{Winning rate of different methods against behavior-tree-based AI across training time. Our proposed method can learn a more competitive agent at a faster learning speed.}
\label{fig:case-results}
\vspace{-3mm}
\end{figure}

\paragraph{Results.} In our experiment, we evaluate the ability of the agents learned by \dqn, \bcq and our method, against the internal behavior-tree-based AI in the game, as is shown in Figure~\ref{fig:case-results}. We have the following observations:
~\noindent\\$\bullet$ Offline RL methods (\bcq and our proposed method) largely reduce the convergence time of classical off-policy method \dqn. This means the generative modeling of the dataset $\dataset$ helps the learning of value function.
~\noindent\\$\bullet$ Our proposed method further outperform \bcq. Compared with \bcq, the discrete version of our method uses the residual network in the generative modeling process, which mitigates the error of modeling the distribution of $\dataset$, and boosts the performance of learned value function.

\section{Conclusion} 

The goal of this work is to study the impact of generative modeling error in offline reinforcement learning (RL). We theoretically and empirically analyze how error propagates from generative modeling to the value function approximation in Bellman backup of off-policy RL. 
We propose \ours (action-conditioned Q-learning), a residual generative model to reduce policy approximation error.
Our experiments suggest that \ours can help to boost the performance of offline RL methods. 
Our case study on complex tasks further verifies that offline RL methods can efficiently learn with faster convergence when integrated in the online process with offline learning. 


\bibliographystyle{named}
\bibliography{proc}

\begin{thebibliography}{}

\bibitem[\protect\citeauthoryear{Berner \bgroup \em et al.\egroup
  }{2019}]{berner2019dota}
Christopher Berner, Greg Brockman, Brooke Chan, et~al.
\newblock Dota 2 with large scale deep reinforcement learning.
\newblock {\em arXiv:1912.06680}, 2019.

\bibitem[\protect\citeauthoryear{Brockman \bgroup \em et al.\egroup
  }{2016}]{brockman2016openai}
Greg Brockman, Vicki Cheung, et~al.
\newblock Openai gym.
\newblock {\em arXiv:1606.01540}, 2016.

\bibitem[\protect\citeauthoryear{Chen \bgroup \em et al.\egroup
  }{2020}]{chen2020heroes}
Sheng Chen, Menghui Zhu, Deheng Ye, Weinan Zhang, Qiang Fu, and Wei Yang.
\newblock Which heroes to pick? learning to draft in moba games with neural
  networks and tree search.
\newblock {\em arXiv:2012.10171}, 2020.

\bibitem[\protect\citeauthoryear{Deng \bgroup \em et al.\egroup
  }{2009}]{deng2009imagenet}
Jia Deng, Wei Dong, Richard Socher, Li-Jia Li, Kai Li, and Li~Fei-Fei.
\newblock Imagenet: A large-scale hierarchical image database.
\newblock In {\em CVPR}. Ieee, 2009.

\bibitem[\protect\citeauthoryear{Dulac-Arnold \bgroup \em et al.\egroup
  }{2019}]{dulac2019challenges}
Gabriel Dulac-Arnold, Daniel Mankowitz, and Todd Hester.
\newblock Challenges of real-world reinforcement learning.
\newblock {\em arXiv:1904.12901}, 2019.

\bibitem[\protect\citeauthoryear{Fu \bgroup \em et al.\egroup
  }{2020}]{fu2020d4rl}
Justin Fu, Aviral Kumar, Ofir Nachum, George Tucker, and Sergey Levine.
\newblock {D4RL: Datasets for deep data-driven reinforcement learning}.
\newblock {\em arXiv:2004.07219}, 2020.

\bibitem[\protect\citeauthoryear{Fujimoto \bgroup \em et al.\egroup
  }{2019}]{fujimoto2019off}
Scott Fujimoto, David Meger, and Doina Precup.
\newblock Off-policy deep reinforcement learning without exploration.
\newblock In {\em ICML}, 2019.

\bibitem[\protect\citeauthoryear{Gottesman \bgroup \em et al.\egroup
  }{2019}]{gottesman2019guidelines}
Omer Gottesman, Fredrik Johansson, et~al.
\newblock Guidelines for reinforcement learning in healthcare.
\newblock {\em Nat Med}, 25(1), 2019.

\bibitem[\protect\citeauthoryear{Haarnoja \bgroup \em et al.\egroup
  }{2018}]{haarnoja2018soft}
Tuomas Haarnoja, Aurick Zhou, Pieter Abbeel, and Sergey Levine.
\newblock Soft actor-critic: Off-policy maximum entropy deep reinforcement
  learning with a stochastic actor.
\newblock {\em arXiv:1801.01290}, 2018.

\bibitem[\protect\citeauthoryear{Huang \bgroup \em et al.\egroup
  }{2017}]{huang2017learning}
Furong Huang, Jordan Ash, John Langford, and Robert Schapire.
\newblock Learning deep resnet blocks sequentially using boosting theory.
\newblock {\em arXiv:1706.04964}, 2017.

\bibitem[\protect\citeauthoryear{Hussein \bgroup \em et al.\egroup
  }{2017}]{hussein2017imitation}
Ahmed Hussein, Mohamed~Medhat Gaber, Eyad Elyan, and Chrisina Jayne.
\newblock Imitation learning: A survey of learning methods.
\newblock {\em ACM Computing Surveys (CSUR)}, 50(2), 2017.

\bibitem[\protect\citeauthoryear{Kalashnikov \bgroup \em et al.\egroup
  }{2018}]{kalashnikov2018scalable}
Dmitry Kalashnikov, Alex Irpan, Peter Pastor, et~al.
\newblock Scalable deep reinforcement learning for vision-based robotic
  manipulation.
\newblock In {\em PMLR}, 2018.

\bibitem[\protect\citeauthoryear{Kumar \bgroup \em et al.\egroup
  }{2019}]{kumar2019stabilizing}
Aviral Kumar, Justin Fu, Matthew Soh, George Tucker, and Sergey Levine.
\newblock Stabilizing off-policy q-learning via bootstrapping error reduction.
\newblock In {\em NeurIPS}, 2019.

\bibitem[\protect\citeauthoryear{Kumar \bgroup \em et al.\egroup
  }{2020}]{kumar2020conservative}
Aviral Kumar, Aurick Zhou, George Tucker, and Sergey Levine.
\newblock {Conservative Q-Learning for Offline Reinforcement Learning}.
\newblock {\em arXiv:2006.04779}, 2020.

\bibitem[\protect\citeauthoryear{Levine \bgroup \em et al.\egroup
  }{2020}]{levine2020offline}
Sergey Levine, Aviral Kumar, George Tucker, and Justin Fu.
\newblock Offline reinforcement learning: Tutorial, review, and perspectives on
  open problems.
\newblock {\em arXiv:2005.01643}, 2020.

\bibitem[\protect\citeauthoryear{Li \bgroup \em et al.\egroup
  }{2011}]{li2011unbiased}
Lihong Li, Wei Chu, John Langford, and Xuanhui Wang.
\newblock Unbiased offline evaluation of contextual-bandit-based news article
  recommendation algorithms.
\newblock In {\em WSDM}, 2011.

\bibitem[\protect\citeauthoryear{Lillicrap \bgroup \em et al.\egroup
  }{2015}]{lillicrap2015continuous}
Timothy~P Lillicrap, Jonathan~J Hunt, et~al.
\newblock Continuous control with deep reinforcement learning.
\newblock {\em arXiv:1509.02971}, 2015.

\bibitem[\protect\citeauthoryear{Mnih \bgroup \em et al.\egroup
  }{2013}]{mnih2013playing}
Volodymyr Mnih, Koray Kavukcuoglu, et~al.
\newblock {Playing Atari with deep reinforcement learning}.
\newblock {\em arXiv:1312.5602}, 2013.

\bibitem[\protect\citeauthoryear{Mnih \bgroup \em et al.\egroup
  }{2015}]{mnih2015human}
Volodymyr Mnih, Koray Kavukcuoglu, David Silver, et~al.
\newblock Human-level control through deep reinforcement learning.
\newblock {\em Nature}, 518(7540), 2015.

\bibitem[\protect\citeauthoryear{Ross and Bagnell}{2010}]{ross2010efficient}
St{\'e}phane Ross and Drew Bagnell.
\newblock Efficient reductions for imitation learning.
\newblock In {\em AIStats}, 2010.

\bibitem[\protect\citeauthoryear{Ross \bgroup \em et al.\egroup
  }{2011}]{ross2011reduction}
St{\'e}phane Ross, Geoffrey Gordon, and Drew Bagnell.
\newblock A reduction of imitation learning and structured prediction to
  no-regret online learning.
\newblock In {\em AIStats}, 2011.

\bibitem[\protect\citeauthoryear{Schaal}{1999}]{schaal1999imitation}
Stefan Schaal.
\newblock Is imitation learning the route to humanoid robots?
\newblock {\em Trends in cognitive sciences}, 3(6), 1999.

\bibitem[\protect\citeauthoryear{Shamir}{2018}]{shamir2018resnets}
Ohad Shamir.
\newblock Are resnets provably better than linear predictors?
\newblock In {\em NeurIPS}, 2018.

\bibitem[\protect\citeauthoryear{Silver \bgroup \em et al.\egroup
  }{2017}]{silver2017mastering}
David Silver, Julian Schrittwieser, et~al.
\newblock {Mastering the game of Go without human knowledge}.
\newblock {\em Nature}, 550(7676), 2017.

\bibitem[\protect\citeauthoryear{Todorov \bgroup \em et al.\egroup
  }{2012}]{todorov2012mujoco}
Emanuel Todorov, Tom Erez, and Yuval Tassa.
\newblock Mujoco: A physics engine for model-based control.
\newblock In {\em IROS}. IEEE, 2012.

\bibitem[\protect\citeauthoryear{Vinyals \bgroup \em et al.\egroup
  }{2019}]{vinyals2019alphastar}
Oriol Vinyals, Igor Babuschkin, Junyoung Chung, et~al.
\newblock {Alphastar: Mastering the real-time strategy game StarCraft II}.
\newblock {\em DeepMind blog}, page~2, 2019.

\bibitem[\protect\citeauthoryear{Virmaux and
  Scaman}{2018}]{virmaux2018lipschitz}
Aladin Virmaux and Kevin Scaman.
\newblock Lipschitz regularity of deep neural networks: analysis and efficient
  estimation.
\newblock In {\em NeurIPS}, 2018.

\bibitem[\protect\citeauthoryear{Wu \bgroup \em et al.\egroup
  }{2019}]{wu2019behavior}
Yifan Wu, George Tucker, and Ofir Nachum.
\newblock Behavior regularized offline reinforcement learning.
\newblock {\em arXiv:1911.11361}, 2019.

\bibitem[\protect\citeauthoryear{Ye \bgroup \em et al.\egroup
  }{2020a}]{ye2020towards}
Deheng Ye, Guibin Chen, Wen Zhang, et~al.
\newblock Towards playing full {MOBA} games with deep reinforcement learning.
\newblock In {\em NeurIPS}, 2020.

\bibitem[\protect\citeauthoryear{Ye \bgroup \em et al.\egroup
  }{2020b}]{ye2020supervised}
Deheng Ye, Guibin Chen, Peilin Zhao, et~al.
\newblock {Supervised Learning Achieves Human-Level Performance in MOBA Games:
  A Case Study of Honor of Kings}.
\newblock {\em TNNLS}, 2020.

\bibitem[\protect\citeauthoryear{Ye \bgroup \em et al.\egroup
  }{2020c}]{ye2020mastering}
Deheng Ye, Zhao Liu, et~al.
\newblock {Mastering Complex Control in MOBA Games with Deep Reinforcement
  Learning.}
\newblock In {\em AAAI}, 2020.

\end{thebibliography}

\newpage
\section{Appendix}

\subsection{Proof of Theorem~\ref{eq:qvalue-errorbound}}

\begin{proof}
Proof follows by expanding each Q, rearranging terms, simplifying the expression and then representing $\policy$ with cloned policy $\policyc$ with a generative error $\errorg$.

\begin{equation}
\begin{aligned}
  &      \error^\policy_\dataset(s,a) \\
= &  Q^*(s,a) - Q^\policy_\dataset(s,a) \\
= & \sum\limits_{s'} (p^*(s'|s, a) (r(s,a,s')+\gamma\sum\limits_{a'}\policy(a'|s') Q^*(s',a') ))\\
- & \sum\limits_{s'} (p_\dataset(s'|s, a) (r(s,a,s')+\gamma\sum\limits_{a'}\policy(a'|s') Q^\policy_\dataset(s',a') )) \\
= & \sum\limits_{s'} (p^*(s'|s, a) - p_\dataset(s'|s, a))r(s,a,s') \\
& + p^*(s'|s, a) \gamma\sum\limits_{a'}\policy(a'|s') Q^*(s',a') \\
& -  p_\dataset(s'|s, a)\gamma\sum\limits_{a'}\policy(a'|s') Q^\policy_\dataset(s',a') \\
= & \sum\limits_{s'} (p^*(s'|s, a) - p_\dataset(s'|s, a))r(s,a,s') \\
& + p^*(s'|s, a) \gamma\sum\limits_{a'}\policy(a'|s') (Q^\policy_\dataset(s',a')+\error^\policy_\dataset(s',a')) \\
& -  p_\dataset(s'|s, a)\gamma\sum\limits_{a'}\policy(a'|s') Q^\policy_\dataset(s',a') \\
= &  \sum\limits_{s'} (p^*(s'|s, a) - p_\dataset(s'|s, a)) \\
    & \cdot [r(s,a,s')+\gamma\sum\limits_{a'}\policy(a'|s') Q^\policy_\dataset(s',a') ] \\
 & +  p^*(s'|s, a)\cdot \gamma \sum\limits_{a'}\policy(a'|s')\error^\policy_\dataset(s',a') \\
= &  \sum\limits_{s'} (p^*(s'|s, a) - p_\dataset(s'|s, a)) \\
    & \cdot [r(s,a,s')+\gamma\sum\limits_{a'}(\policyb(a'|s') + \errorg(a'|s')) Q^\policy_\dataset(s',a') ] \\
 & + p^*(s'|s, a)\cdot \gamma \sum\limits_{a'}(\policyb(a'|s') + \errorg(a'|s'))\error^\policy_\dataset(s',a')
\end{aligned}
\end{equation}

\end{proof}


\subsection{Training procedure}
\begin{algorithm}[t!]
\small
\DontPrintSemicolon
\caption{Training procedure of \ours}
\label{alg:ours}

\KwIn{Dataset $\dataset$, target network update rate $\tau$, total batches $B$, number of sampled actions $n$, minimum $\lambda$}
\KwOut{Generative model $G_\omega$, Q-ensemble $\{Q_{\theta_i}\}^K_{i=1}$, target Q-networks $\{Q_{\theta'_i}\}^K_{i=1}$ and target generative model $G_{\omega'}$}

Initialize Q-ensemble $\{Q_{\theta_i}\}^K_{i=1}$, generative model $\gmodel_\omega$, Lagrange multiplier $\alpha$, target networks $\{Q_{\theta'_i}\}^K_{i=1}$ and target generative model $\gmodel_{\omega'}$ with $\omega' \leftarrow \omega$, $\theta' \leftarrow \theta$ ; \;
\For{i $\longleftarrow$ 0, 1, \dots, $B$}
{
    Sample mini-batch of transitions $(\state,\action,\reward,\state') \sim \dataset$ ; \;
    \nonl
    \textbf{Value function approximation update:} \;
    Sample $p$ action samples, $\{\action_i \sim G_{\omega'}(\cdot|\state') \}^p_{i=1}$ ; \;
    Compute target value $y(s,a)$ using Equation~\eqref{eq:target-value} ; \;
    $\forall i$, $\theta_i \leftarrow  \arg \min_{\theta_i} (Q_{\theta_i}(\state,\action)-  y(s,a))^2$ ; \;
    \nonl
    \textbf{Generative modeling update:} \;
    Sample actions $\{ \hat{\action}_j\sim \gmodel \}^n_{j=1}$ and $\{\action_j \sim \dataset(\state) \}^n_{j=1}$ ; \;
    Update $\omega$, $\alpha$ by minimizing the objective function  by using dual gradient descent ; \;
    
    Update target networks: $\omega'_i \leftarrow \tau \omega_i + (1-\tau) \omega'_i$, $\theta'_i \leftarrow \tau \theta_i + (1-\tau) \theta'_i $ ; \;
    
}
\end{algorithm}

\subsection{Mujoco Experiment Parameter Settings}

For the Mujoco tasks, we build \ours on top of the implementation of \bear, which was provided at in~\cite{fu2020d4rl}. Following the convention set by~\cite{fu2020d4rl}, we report the unnormalized, smooth average undiscounted return over 3 seed for our results in our experiment.

The other hyperparameters we evaluated on during our experiments, and might be helpful for using \ours are as follows:

\begin{itemize}
    \item Network parameters. For all the MuJoCo experiments, unless specified, we use fully connected neural networks with ReLU activations. For policy networks, we use tanh (Gaussian) on outputs following \bear, and all VAEs are following the open sourced implementation of \bcq. For network sizes, we shrink the policy networks from (400, 300) to (200,200) for all networks, including \bcq and \bear for fair comparison and saving computation time without losing performance. We use Adam for all optimizers. The batch size is 256 for all methods except for BCQ, where in the open sourced implementation of BCQ, it is 100 and we keep using 100 in our experiments.
    \item Deep RL parameters. The discount factor $\gamma$ is always 0.99. Target update rate is 0.05. At test time we follow \bcq and \bear by sampling 10 actions from $\policy$ at each step and take one with the highest learned Q-value.
\end{itemize}

\subsection{Additional improvements over existing offline RL methods}
We also acknowledge that recently there are various of offline RL methods proposed to reduce the value function approximation error. In this part, we show our method can improve existing offline RL methods by additionally reducing the generative modeling error. Here, we take a most recent method in continuous control scenario, \cql, and compare it with the variant of our method:
~\noindent\\$\bullet$ \textbf{\ours-C} is based on \ours, which additionally minimizes Q-values of unseen actions during the value function approximation update process. This penalty is inspired by \cql, which argues the importance of conservative off-policy evaluation.

\begin{table*}[htb]
\centering
\caption{Additional improvements over \cql. Performance of \ours on the normalized return metric, averaged over 3 random seeds. }
\label{tab:additional-improvements}
\begin{tabular}{cccc}
\toprule
                   & AQL         & CQL           & AQL-C         \\
                   \midrule
\cheetah-random  & 25.16 (8.31)   & 32.16 (3.31)   & 36.48 (4.1)    \\
\hopper-random     & 11.72 (0.81)   & 10.53 (4.12)   & 12.24 (1.07)   \\
\walker-random   & 7.63 (1.28)    & 7.55 (0.81)    & 8.47 (0.93)    \\
\cheetah-medium   & 36.24 (2.47)   & 43.16 (4.42)   & 44.13 (3.14)   \\
\walker-medium   & 38.4 (9.67)    & 75.93 (7.7)    & 76.16 (9.7)    \\
\hopper-medium    & 54.67 (13.73)  & 52.38 (5.35)   & 55.68 (6.35)   \\
\cheetah-expert  & 101.39 (10.22) & 100.38 (10.17) & 102.46 (11.21) \\
\hopper-expert    & 122.94 (8.28)  & 108.91 (11.03) & 110.76 (9.16)  \\
\walker-expert   & 70.85 (4.8)    & 3.12 (2.44)    & 70.15 (0.48)   \\
\bottomrule
\end{tabular}
\end{table*}

\begin{figure}[t!]
\centering
  \includegraphics[width=0.45\textwidth]{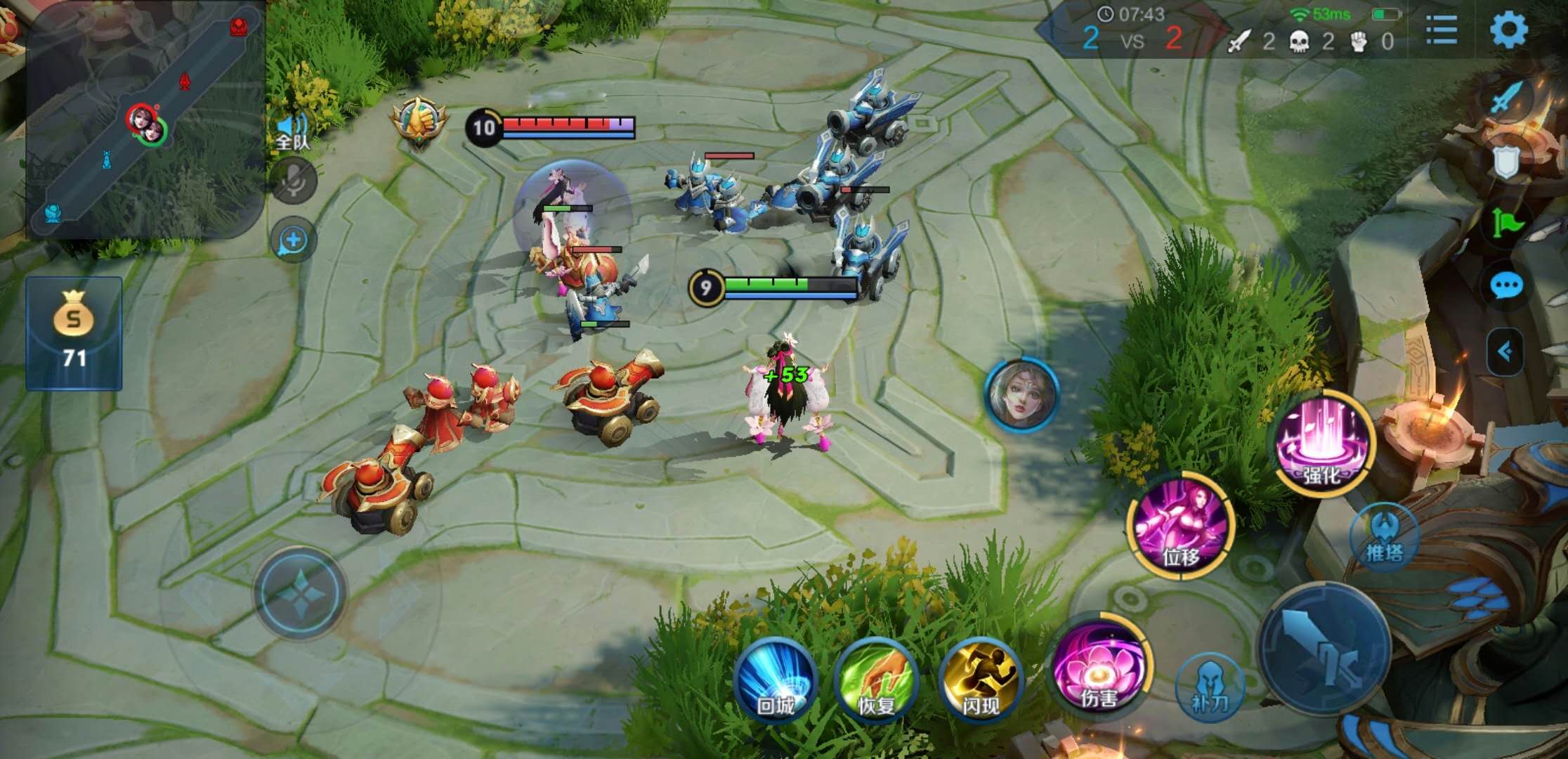}
 \caption{The environment of Honor of Kings}
    \label{fig:environment}
\end{figure}

\begin{figure}[t!]
\centering
  \includegraphics[width=0.45\textwidth]{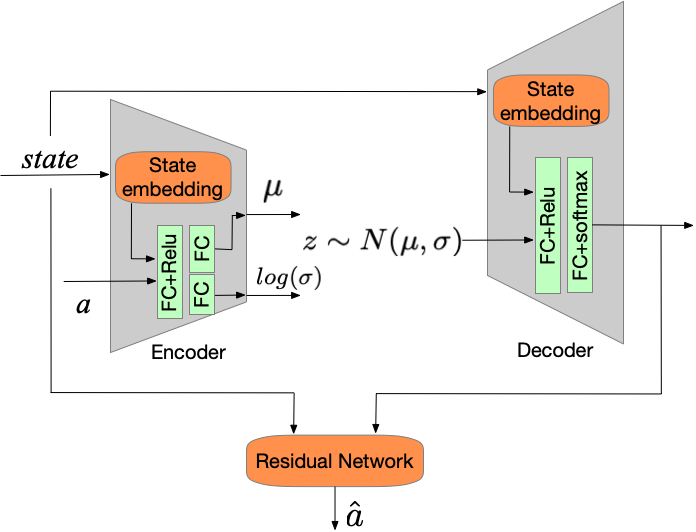}
 \caption{Proposed residual generative model}
    \label{fig:model}
\end{figure}

\subsection{Honor of Kings Game Description}
The Honor of Kings 1v1 game environment is shown in Figure~\ref{fig:environment}.
In the Honor of Kings 1v1 game, there are two competing agents, each control one hero that can gain golds and experience by killing creeps, heroes or overcoming towers. The goal of an agent is to destroy its opponent's crystal guarded by the tower. The state of an agent is considered to be a 2823-dimensional vector containing information of each frame received from game core, \eg hero health points, hero magic points, location of creeps, location of towers, \etc. The action of an agent is a 79-dimensional one-hot vector, indicating the directions for moving, attacking, healing, or skill releasing. Compared with traditional games like Go or Atari, \textit{Honor of Kings} 1v1 version has larger state and action space and more complex control strategies, as is indicated in~\cite{ye2020mastering}.


\subsection{Honor of Kings Parameter Settings}

\paragraph{Experimental Settings} In our experiment, we aim to learn to control \textit{Diao Chan}, a hero in Honor of Kings. Specifically, we are interested in whether the offline RL method can accelerate the learning process of existing online off-policy RL. The training procedure is an iterative process of off-policy data collection and policy network updating. During the off-policy data collection process, we run the policy over 210 CPU cores in parallel via self-play with mirrored policies to generate samples with $\epsilon$-greedy algorithm. All the data samples are then collected to a replay buffer, where the samples are utilized as offline data in the policy network updating process. We train our neural network on one GPU by batch samples with batch size 4096. We test each model after every iteration against common AI for 60 rounds in parallel to measure the capability of the learned model.

\begin{itemize}
    \item Network parameters. For all the Honor of Kings experiments, unless specified, we use fully connected neural networks with ReLU activation. We use dueling \dqn with state value network and advantage network, with shared layers of sizes (1024, 512) and separate layers of sizes (512, ) and (512, ). We use linear activation in state value network and advantage network. Based on \dqn, \bcq has an additional feed-forward network with the size of hidden layers as (1024, 512, 512). \ours has the same feed-forward network as \bcq, and a conditional-VAE whose architecture as \bear, which is shown in Figure~\ref{fig:model}: the state embedding module has two layers with the size of (1024, 512). The latent vector $z$ has the same dimension the action space. 
    \item RL parameters. The discount factor $\gamma$ is always 0.99. Target update rate is 0.05. At test time we follow \bcq and \bear by sampling 10 actions from $\policy$ at each step and take one with the highest learned Q-value. For \bcq, the policy filters actions that has lower probabilities over the highest Q-value with threshold $\epsilon=0.1$. For \ours,  the tradeoff factor $\alpha=0.05$ and the $\epsilon=0.1$.
\end{itemize}



\end{document}